\newtheorem{prop}{Proposition}
\newtheorem{theorem}{Theorem}
\theoremstyle{definition}
\newcommand{\norm}[1]{\left\lVert#1\right\rVert}
\def\R{{\mathcal{R}}}
\def\C{{\mathcal{C}}}
\def\Cdyn{{\mathcal{C}^\text{dyn}}}
\def\dm{{\mathrm{d}}}
\def\sA{{\mathcal{A}}}
\def\sB{{\mathcal{B}}}
\def\bdot{{\boldsymbol{\cdot}}}
\def\path{{\phi}}
\def\am{{\mathrm{\alpha}}}
\def\bm{{\mathrm{\beta}}}
\title{Optimal Unsupervised Domain Translation}
\newcommand*\samethanks[1][\value{footnote}]{\footnotemark[#1]}
\author{%
  Emmanuel de Bézenac\thanks{Equal contribution}\\ 
  Sorbonne Université, UMR 7606, LIP6, Paris\\
  \texttt{emmanuel.de-bezenac@lip6.fr}\\
  \And Ibrahim Ayed\samethanks\\
  Sorbonne Université, UMR 7606, LIP6, Paris\\
  Theresis Lab, Thales, Paris\\
  \texttt{ibrahim.ayed@lip6.fr} \\
  \And Patrick Gallinari\\
  Sorbonne Université, UMR 7606, LIP6\\
  Criteo AI Lab, Paris\\
  \texttt{patrick.gallinari@lip6.fr} \\
}
\begin{document}

\maketitle

\begin{abstract}
Domain Translation is the problem of finding a meaningful correspondence between two domains. Since in a majority of settings paired supervision is not available, much work focuses on Unsupervised Domain Translation (UDT) where data samples from each domain are unpaired. Following the seminal work of \textit{CycleGAN} for UDT, many variants and extensions of this model have been proposed. However, there is still little theoretical understanding behind their success. We observe that these methods yield solutions which are approximately minimal w.r.t. a given transportation cost, leading us to reformulate the problem in the Optimal Transport (OT) framework. This viewpoint gives us a new perspective on Unsupervised Domain Translation and allows us to prove the existence and uniqueness of the retrieved mapping, given a large family of transport costs. We then propose a novel framework to efficiently compute optimal mappings in a dynamical setting. We show that it generalizes previous methods and enables a more explicit control over the computed optimal mapping. It also provides smooth interpolations between the two domains. Experiments on toy and real world datasets illustrate the behavior of our method.\\

\end{abstract}


Given pairs of elements from two different domains, \textit{domain translation} consists in learning a mapping from one domain to another, linking these paired elements together. If we consider pairs of photographs of a given scene, and associated artistic paintings of the same scene, our mapping would learn to map the photographs to the associated paintings, and conversely, to map paintings to the associated photographs. A wide range of problems can be formulated as translation, including image-to-image~\cite{pix2pix2016} or video-to-video~\cite{vid2vid_paired} translation, image captioning~\cite{stackgan}, natural language translation~\cite{trad}, etc. In the general case, obtaining paired examples can be hard. The \textit{unpaired} setting where samples from both domains are available without any pairing allows us to tackle a wider range of problems. A seminal work in this direction has been the CycleGAN model proposed by \cite{CycleGAN} which has led to extensions for many problems. In the following, we adopt this general framework for Unsupervised Domain Translation (UDT).

Despite their impressive successes, there remains little theoretical understanding on why these models work. \cite{galanti, esther} have observed that the approach in \cite{CycleGAN} is ill-conditioned: in most cases, any pairing between samples of both domains minimizes the loss. This is in contradiction with empirical evidence and shows that there must be an implicit bias towards well-behaved mappings. \cite{galanti} made a first step in this direction by demonstrating that \textit{semantically} coherent mappings are obtained by networks of minimal relative \textit{complexity}, a notion related to the number of hidden layers in a neural network (NN).

While this idea is interesting, functional complexity alone cannot account for the semantical constraints of a given UDT problem. The aim of this work is to characterize rigorously the properties of successful models by introducing an abstract cost which constrains the problem and leads to a unique solution for the chosen cost, while allowing to implement a meaningful mapping between the two domains. Our investigations are supported both by empirical observations and theoretical analysis: we show that the added cost transforms UDT into an Optimal Transport problem which we then formulate in a dynamical setting. This provides links with neural network implementations and leads us to a natural, easily implementable general algorithm and leads us to a natural, easily implementable general algorithm. It also allows for a more precise control over the underlying semantics.

Our main contributions are the following:
\begin{itemize}
  \setlength\itemsep{0em}
    \item We show that current methods for solving UDT are ill-conditioned and we propose an explicit additional cost that will allow us to formalize a well posed UDT instance.
    \item We show that this new formulation is equivalent to finding the Optimal Transport (OT) between the domains \textit{w.r.t.} to this cost. This provides a new framework for a theoretically grounded UDT with the existence and unicity of the learned mapping for a wide range of cost functions.
    \item Based on the dynamical formulation of OT, we introduce  a new method to retrieve the optimal transport plan, \textit{Dynamic Optimal Neural Translation} (DONT) that leads to efficient implementations.
    \item We investigate the practical implications of the theoretical results and we illustrate experimentally how it is possible to learn accurate mappings with interpolation capacities.

\end{itemize}

\section{Unsupervised Domain Translation}
\label{sec:udt}

We introduce below two constraints characterizing a large family of UDT methods inspired by \cite{CycleGAN} and subsequent works. We then show that this formulation corresponds to an ill posed problem, and we illustrate using a toy example how these methods have an inductive bias towards learning minimal mappings. We formalize this intuition by defining  a new instance of UDT as an optimal transport problem. This guaranties well posedness: the solution to this problem exists and is unique.

\subsection{Introduction}
\label{sec:intro_udt}

Adopting a probabilistic viewpoint, both domains can be expressed as probability distributions $\am$ and $\bm$ supported over compact sets $\sA,\sB \subseteq \R^d$.  UDT consists in finding mappings ${T\colon \sA \to \sB}$ and ${S\colon \sB \to \sA}$, such that $(T,S)$ yields \textit{semantically meaningful} pairings.
The latter notion remains informal in the literature and is enforced by carefully tuning architectures and parameters. 

CycleGAN \cite{CycleGAN} and its extensions ~\cite{unsup_trans, sr_gan, speech2text, stargan} base their approach on the two following constraints:
\begin{itemize}
\label{translation_cond}
    \item[--] \textbf{\textit{Coherence}\footnote{The \textit{push-forward} $f_\sharp \rho$ is defined as ${f_\sharp \rho (B) = \rho(f^{-1}(B))}$, for any measurable set $B$. Said otherwise, coherence means that $T$ maps $\alpha$ to $\bm$ and $S$ does the reverse.} }: $\quad T_\sharp \am = \bm$, $\quad$ and $\quad S_\sharp \bm = \am$. 
    
    \item[--] \textbf{\textit{Inversibility}\footnote{Notation $f \overset{\mu-a.s.}{=} g$ expresses that $\int_B f \,\dm\mu = \int_B g\, \dm \mu$, for any measurable set $B$. Intuitively, this means that the equality is true over all sets of non-null $\mu$ measure.}}:  $\quad S\circ T \overset{\am-a.s.}{=} \text{id}$, $\quad$ and $\quad T\circ S\overset{\bm-a.s.}{=} \text{id}$,

\end{itemize}

\textit{Coherence} ensures that the transformations exactly map the two domains one onto the other. Usually, it is enforced by using a loss term of the form $D(T_\sharp \am, \bm) + D(S_\sharp \bm, \am)$, where $D$ corresponds to some measure of discrepancy between probability distributions. $D$ may be implemented using generative adversarial training~\cite{CycleGAN}, a denoising auto-encoder loss~\cite{unsup_trans, speech2text}, a distance in a shared projection space~\cite{bridging}, etc. \textit{Inversibility} was proposed as a way to construct information preserving transformations and is usually enforced via a \textit{cycle-consistent loss}\footnote{The $p-$norm of $f$ against measure $\mu$ is defined as $\norm{f}_{\mathcal{L}^p(\mu)} = (\int_{\R^d} |f|^p \dm \mu)^{\frac{1}{p}}$}: $\norm{S \circ T - \text{id}}_{L^p(\am)} + \norm{T \circ S - \text{id}}_{L^p(\bm)}$. For example, in \textit{CycleGan}~\cite{CycleGAN}  $D$ corresponds to an adversarial loss, and a cycle-consistent loss with the $L^1$ norm is used.

Neither invertibility nor coherence imply that the semantic information for a specific element of the input domain is conserved by the mapping. When $(\am,\bm)$ have a discrete support, any pairing is both invertible and coherent. In the continuous case, there is an infinite number of mappings verifying the constraints~(see section \ref{supp:wellpos} of the appendix for a proof): For example, a transformation associating beach photographs with mountain paintings and mountain photographs with beach paintings would not violate the constraints, while not being semantically meaningful.

\subsection{Current methods are Biased towards Small Transformations}

\begin{figure}
    \includegraphics[width=1.\textwidth]{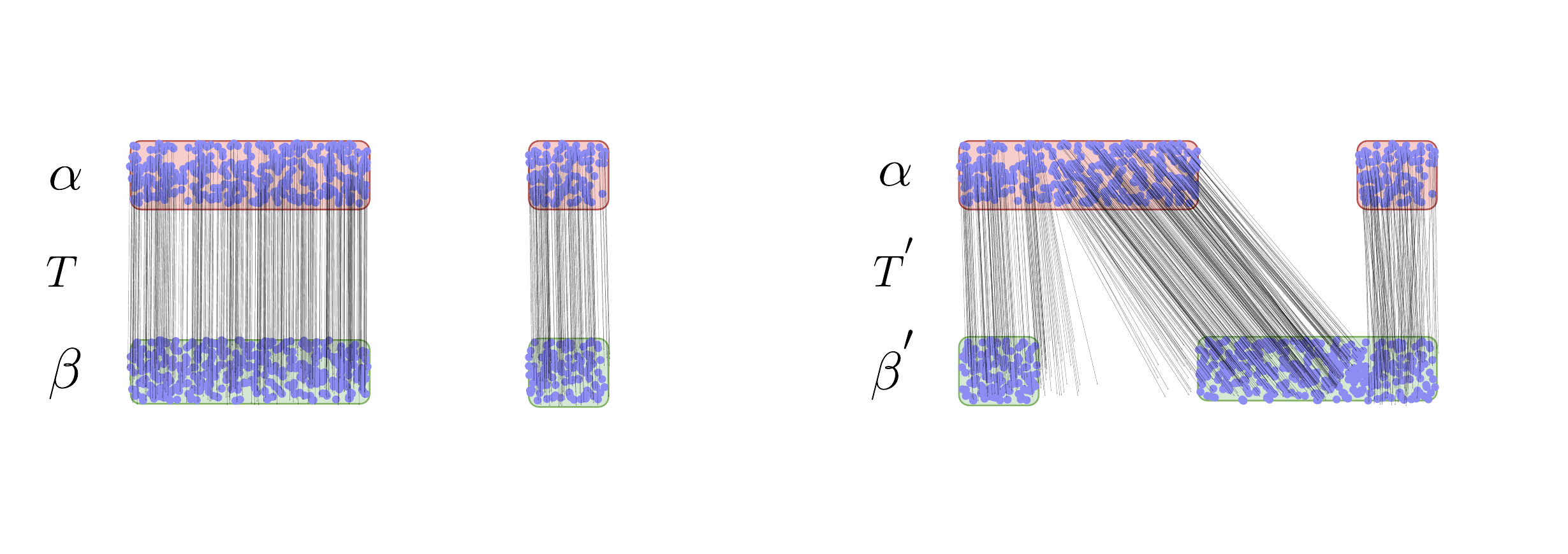}
    \vspace{-1.5cm}
    \caption{Two UDT tasks: map the 2D $\alpha$ distribution onto the 2D $\beta$ distribution. On the left $\beta$ is a translated version of $\alpha$ while on the right it is translated and rotated. Lines illustrate a typical mapping learned by CycleGAN.}
    \label{fig:toy}
\end{figure}

Before defining UDT as an OT problem, let us intuitively motivate this choice using a simple example of the typical CycleGAN behavior. Fig. \ref{fig:toy} shows two UDT instances. For the first one, data generated from a 2D distribution $\alpha$ are translated to produce distribution $\beta$ (left); for the second one, the same $\alpha$ is translated and rotated (right) to produce $\beta'$. For both, UDT consists in mapping the alphas to the betas. The two  mappings learned by CycleGAN, respectively $T$ and $T'$, are materialised by lines joining $(x,T(x))$ (left) and $(x,T'(x))$ (right) \footnote{Note that the mappings are not perfectly learned for this example, so that some lines have no end point in the target domain}. While the left situation is satisfying, the one on the right is not. This illustrates two points: (1) CycleGan seems to have an inductive bias towards minimal mappings that do not transform much the data and (2) this  is satisfying when the distributions are close one to the other (Fig. \ref{fig:toy} left), but it does not allow capturing more complex semantics (right). Thus, information about the task must be encoded in the problem. We will formalize these points by defining transformations with minimal transport cost, allowing us to select appropriate cost functions for a specific problem, and potentially tackle new UDT problems.

\subsection{Unpaired Domain Translation as Optimal Transport}


In the following, we consider that we have access to a finite number of samples of both distributions $\am$, $\bm$ which are assumed to be absolutely continuous \textit{w.r.t.} to the Lebesgue measure. Let us consider an abstract cost function $c(x, y)$, low when $x$ and $y$ are semantically similar, and high otherwise. In practice (Section \ref{sec:expes}), $x$ and $y$ will be learned representations of the data, typically NN projections, and will encode semantic information. In this case, $c$ could correspond to a distance between $x$ and $y$. The mapping of $x$ through $f\colon \R^d \to \R^d$ then incurs a cost $c\!\left(x, f(x)\right)$. We define a \textit{semantic preserving mapping} as a mapping with a low cost for all elements on the support of distribution $\mu$ of $x$, \textit{i.e.} a low value for $\int_{\R^d} c\!\left(x, f(x)\right) \, \dm \mu(x)$. In order to solve an UDT problem, one must then select the right cost for a given problem, and look for the mapping minimizing this cost.

Out of all mappings satisfying the \textit{invertibility} and \textit{coherence} conditions, we are now looking for mappings $T$ and $S$ that are of minimal cost \textit{w.r.t.} $c$:
\begin{mini}
    {T, S}{\int_{\R^d} c\!\left(x, T(x)\right) \, \dm \am(x) \; + \; \int_{\R^d} c\!\left(y, S(y)\right) \, \dm \bm(y)}
    {}{}
    \addConstraint{\quad T_\sharp \am = \bm, \quad S_\sharp \bm = \am}{}   
    \addConstraint{S\circ T \overset{\am-a.s.}{=}\text{id}, \quad T\circ S\overset{\bm-a.s.}{=} \text{id}}{}
\label{eq:ot}
\end{mini}
This optimisation problem can be rewritten in a simpler form, which is exactly the classical Monge formulation of optimal transport (OT).
    \begin{mini}
    {T}{ \C(T) = \int_{\R^d} c(x, T(x)) \, \dm \am(x)}
    {}{}
    \addConstraint{T_\sharp \am }{=\bm}
    \label{eq:monge}
    \end{mini}
    
The demonstration directly stems from the following powerful result \cite{Santambrogio} which not only shows that UDT is indeed an OT problem, but also provides existence and unicity of the optimal mappings $T$ and $S$, for a wide range of cost functions $c$.

\begin{theorem}
    Let $\am$, $\bm$ be absolute continuous measures. If $c(x,y)=h(x-y)$ where $h$ is strictly convex, then there \textbf{exists a unique} couple $(T,S)$ of transformations such that:
    \begin{itemize}
        \item $T_\sharp\mathcal{\am}=\mathcal{\bm}$, and $S_\sharp\mathcal{\bm}=\mathcal{\am}$,
        \item $\C(T)$ is minimal, and $S$ is the minimal transport from $\bm$ to $\am$.
        \item $T\circ S \overset{\mathcal{\bm}-a.s.}{=} \text{id}$, and $S\circ T \overset{\mathcal{\am}-a.s.}{=} \text{id}$,
    \end{itemize}
\end{theorem}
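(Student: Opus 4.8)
The backbone of the argument is the Gangbo--McCann / Brenier existence-and-uniqueness theorem for strictly convex costs, which is exactly the cited result of \cite{Santambrogio}: for an absolutely continuous source measure and a cost $h(x-y)$ with $h$ strictly convex, the Monge problem admits a transport map, this map is unique, and it is the support map of the unique optimal Kantorovich plan. My plan is to invoke this twice and then upgrade the two resulting maps to a genuine inverse pair. Applying the theorem to \eqref{eq:monge} gives a unique minimiser $T$ of $\C(T)$ with $T_\sharp\am=\bm$, which settles the first bullet for $T$ together with its minimality. Applying it to the reverse Monge problem of transporting $\bm$ onto $\am$ --- whose cost $c(y,S(y))=h(y-S(y))$ is again of the form $h(\text{source}-\text{target})$ with the same strictly convex $h$, and with $\bm$ absolutely continuous --- yields a unique minimal $S$ with $S_\sharp\bm=\am$. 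At this point existence, uniqueness and optimality of each of $T$ and $S$ are in hand; what remains, and what is the real content, is the invertibility.

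For invertibility I would pass to the plan picture. Encode $T$ and $S$ by their graph couplings $\gamma_T=(\id,T)_\sharp\am$ and $\gamma_S=(\id,S)_\sharp\bm$, which by the theorem are the unique optimal plans for the forward and reverse problems. Introduce the transpose $\gamma_T^{\top}$, the image of $\gamma_T$ under the swap $(x,y)\mapsto(y,x)$, which is automatically a coupling of $\bm$ and $\am$. The crux is to show $\gamma_T^{\top}$ is optimal for the reverse problem; granting this, uniqueness of the reverse optimal plan forces $\gamma_S=\gamma_T^{\top}$. Since $\gamma_T^{\top}$ is concentrated on $\{(T(x),x)\}$ while $\gamma_S$ is concentrated on the graph $\{(y,S(y))\}$, equality of the two plans means $S(T(x))=x$ for $\am$-a.e.\ $x$, i.e.\ $S\circ T\overset{\am-a.s.}{=}\id$; feeding this through $T_\sharp\am=\bm$ then gives $T\circ S\overset{\bm-a.s.}{=}\id$. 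As a by-product, the fact that $\gamma_T^{\top}$ must itself be a graph shows $T$ is $\am$-a.e.\ injective, so $T^{-1}$ is genuinely well defined and equal to $S$.

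The step I expect to be the main obstacle is precisely the optimality of $\gamma_T^{\top}$. Transposition is a cost-preserving bijection between forward and reverse couplings only when the cost is invariant under the swap, that is when $h(x-y)=h(y-x)$; for an even cost (in particular for norm- or squared-norm-type costs, which is the relevant case here) this is immediate and the argument closes cleanly, since transposition then sends forward optimisers to reverse optimisers. For a general strictly convex $h$ one cannot invoke cost equality and must argue on the support instead, checking that $c$-cyclical monotonicity of $\mathrm{supp}\,\gamma_T$, with $c(x,y)=h(x-y)$, transfers to $\tilde c$-cyclical monotonicity of $\mathrm{supp}\,\gamma_T^{\top}$, with $\tilde c(y,x)=h(y-x)$ --- a transfer that holds in one dimension because both optimal plans are the monotone rearrangement and the monotone rearrangements of $(\am,\bm)$ and $(\bm,\am)$ are mutually inverse. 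I would therefore develop the symmetric-cost case as the main line, and treat the $\am$-a.e.\ injectivity of $T$ and the two almost-sure identities as the careful measure-theoretic bookkeeping that finishes the proof.
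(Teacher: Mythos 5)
Your argument is sound, but it is worth saying up front that the paper does not actually prove this theorem: it is imported verbatim from \cite{Santambrogio} (the text around it only justifies the reduction of problem (1) to the Monge form (2)). Your reconstruction --- two applications of the Gangbo--McCann existence/uniqueness theorem, followed by the identification $\gamma_S=\gamma_T^{\top}$ via uniqueness of the optimal plan, and reading off $S\circ T\overset{\am\text{-a.s.}}{=}\id$ and $T\circ S\overset{\bm\text{-a.s.}}{=}\id$ from the fact that both plans are graphs --- is the standard proof and all the individual steps check out: $\gamma_T^{\top}=(T,\id)_\sharp\am$ equalling $(\id,S)_\sharp\bm$ does force $S(T(x))=x$ for $\am$-a.e.\ $x$, and pushing forward by $T$ gives the other identity. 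What you gain over the paper is precision about exactly where invertibility comes from (uniqueness of the Kantorovich plan, not of the map alone), and you have correctly isolated the one genuinely delicate point that the paper's statement glosses over: transposition is cost-preserving only when $h$ is even, so the theorem as literally stated (with the reverse problem costed by $h(y-x)$, as in the paper's problem (1)) is only immediate for symmetric $h$. For the costs the paper actually uses, $\|x-y\|_p^p$, this is harmless, and your one-dimensional monotone-rearrangement remark covers the asymmetric case there; for asymmetric strictly convex $h$ in dimension $d\ge 2$ the clean resolution is to cost the reverse problem by $\check h(y-x):=h(x-y)$, under which your transposition argument closes with no evenness assumption --- this is arguably the convention the cited source intends. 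So: correct proof, honest about its one gap, and more informative than the paper, which offers no proof at all.
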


This result has several interesting implications:
\begin{itemize}
    \item[--] Existence and unicity of the minimal transformation are guaranteed in theory. A possible explanation of why meaningful translation are achieved by simple UDT models is that they optimize an implicit transport cost. Empirical evidence supporting this idea is provided in appendix \ref{supp:bias}.
    \item[--] When solving \eqref{eq:monge} all the constraints present in \eqref{eq:ot} are automatically verified, without having to enforce them explicitly, apart from coherence over $T$.
    \item[--] Invertibility is naturally guaranteed. Solving on both $T$ and $S$ is not necessary, as solving only on $T$ yields the same result\footnote{This obviously only applies from a theoretical point of view when the optimum can be practically calculated so that optimizing over $S$ can still be useful in some practical situations.}, although in practice we still have to find $S$.
\end{itemize}



\section{\label{sec:dont}Dynamical Optimal Neural Transport (DONT)}
This formulation introduces a relation between UDT as OT, but does not provide a link with the NN models used for UDT and CycleGAN inspired methods. Considering now the dynamical point of view of OT will allow us to develop this link. It will provide us with a simple way to enforce OT for costs of the form $c(x, y) = \|x-y \|^p_p, p > 1$ and for developing an efficient algorithm that can perform inference on new data points.


\subsection{\label{subsec:resnets}Residual Networks and Dynamical Systems}

Residual networks have shown to be a core component for solving UDT \cite{CycleGAN}. They apply transformations $x_{n+1}=x_n + v_n(x_n)$, in an iterative fashion which corresponds to the discretization of an Ordinary Differential Equation (ODE) ~\cite{he_deep_2016}. In the continuous limit, a ResNet implements an ODE\cite{weinan, NODE}, characterized by a vector field $v$:
\begin{equation}
\label{eq:state}
    \frac{dx_t}{dt} = v_t(x_t)
\end{equation}
This equation defines a latent trajectory gradually moving an input $x_0$ from $t=0$ to $1$ along $v_t$, producing an output $x_1$. Intuitively, if at each step $t$, the velocity $v_t$ applied to $x_t$ is small, then the overall transport of $x_0$ should be small as well. It turns out that this intuitive argument can be made rigorous through the dynamical formulation of OT.

\subsection{\label{subsec:dyn_formulation}A Dynamical Formulation of the Optimal Transport problem}

\begin{figure}[htb!]
    \centering
    \includegraphics[width=.9\textwidth]{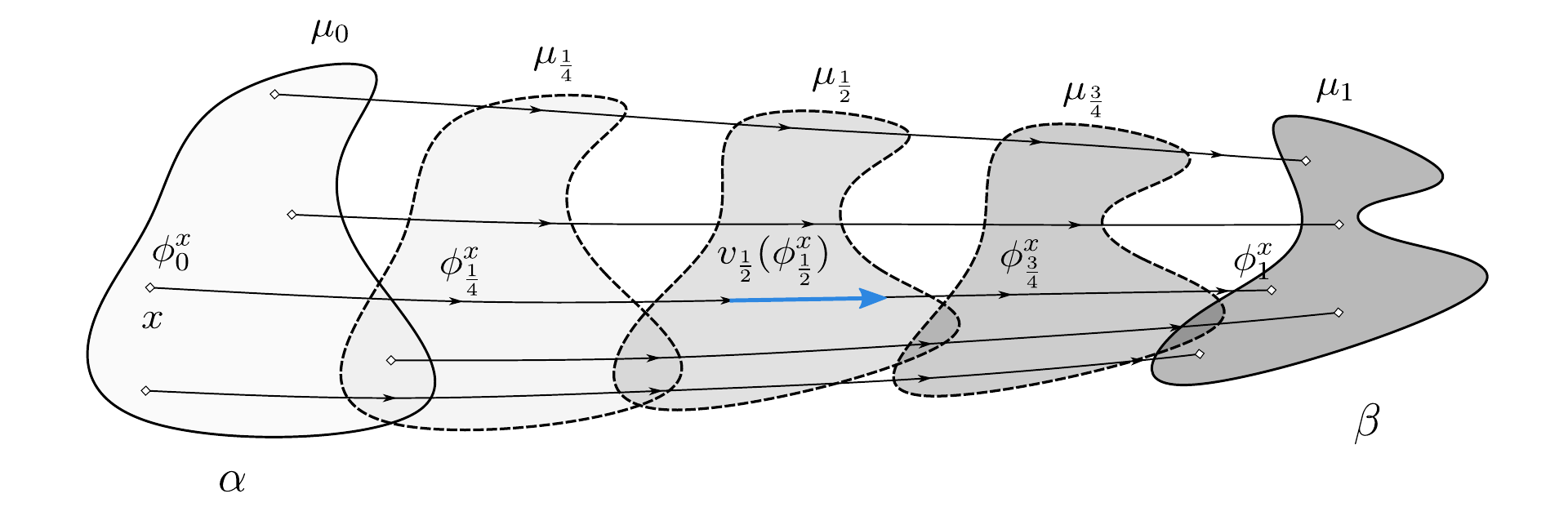}
    \caption{ The Figure illustrates successive steps of the dynamic transportation of $\am$ to $\bm$ together with the notations used in the text. Each step could for example correspond to a transformation performed by an elementary module of a ResNet.}
    \label{fig:dynamic}
\end{figure}






In the dynamical formulation of OT \cite{Santambrogio}, probability mass of $\am$ is not directly pushed to $\bm$ by the static mapping $T$ of Section (\ref{sec:intro_udt}), but is gradually transported in time along a given path of minimal cost. This path is given by the velocity vector field $v$ solution of the problem:
\begin{mini}
    {v}{ \Cdyn(v) = \int_0^1 \norm{v_t}^p_{L^p(\mu_t)}\dm t}
    {}{}
    \addConstraint{\partial_t \mu_t + \nabla \cdot (\mu_t v_t)}{= 0, \mu_{0}= \am, \mu_{1}= \bm}
    \label{eq:dyn}
\end{mini}

where $(\mu_t)_{t \in [0, 1]}$ is the minimal energy path from $\am$ to $\bm$ in a measure space (see Fig. \ref{fig:dynamic}). For costs of the form $c(x, y) = \| x -y \|^p_p$, this formulation is equivalent to (\ref{eq:monge}), meaning that the overall transformation from $\am$ to $\bm$ is the same, and thus yields the same transport cost. Directly solving (\ref{eq:dyn}) requires solving the continuity equation ${\partial_t \mu_t + \nabla \cdot (\mu_t v_t)=0}$, starting from the initial density of $\am$. However $\am$ is unknown: we only have ever have access to samples and estimating $\am$ in high dimensional spaces is prohibitive.
Instead of modeling the evolution of the density in time, we model the trajectories induced by the elements $x\in\sA$ the support of $\am$, displaced along the vector field $v$. Let $\path\colon \sA \times [0, 1] \to \R^d\!, \: (x, t) \mapsto \path^x_t$ describe the position of elements $x$ of $\am$ at time $t$, when they are displaced along $v$ (see Fig. \ref{fig:dynamic}). Then the optimization problem can be equivalently written as:
\begin{mini}
    {v}{ \Cdyn(v) = \int_0^1 \|v_t\|_{L^p((\path^\bdot_t)_\sharp\am)}^p\, \dm t}
    {}{}
    \addConstraint{\partial_t\path_t^x}{ = v_t(\path_t^x)}
    \addConstraint{\path^\bdot_0}{= \text{id}}
    \addConstraint{(\path^\bdot_1)_\sharp\mathcal{\am} }{=\mathcal{\bm}}
    \label{eq:lagrangian_form}
\end{mini}

where function $\phi_t^{\cdot}:\sA \to \R^d\!$ is the transport map at time $t$. Note that by replacing $x_t$ by $\path_t^x$ in the forward equation $\partial_t\path_t^x = v_t(\path_t^x)$, we recover the ResNet continuous time limit. This will allow us to link the latent trajectories of residual networks with these minimal length trajectories.

\subsection{Instantiation}
\label{sec:algorithm}
This section introduces how to solve equation (\ref{eq:lagrangian_form}), and thus how to develop a practical UDT algorithm.

\paragraph{Discretization.} We start by discretizing the forward equation $\partial_t\path_t^x = v_t(\path_t^x)$ in time via a $K$ step Euler discretization, starting from $\path^x_0=x$:
\[
\forall x,\ \path^x_{(k+1)\Delta t} = \path^x_{k\Delta t} + \Delta t \, v_{k\Delta t}(\path^x_{k\Delta t})
\]
Here $K$ is the total number of discret steps defining the transformation. Note that by doing so, a residual network architecture is recovered.
For readability purposes, in the following, $\Delta t$ will be omitted in $\path^x_{j\Delta t}$ and $v_{j\Delta t}$. We now replace the unknown distribution $\am$ with its empirical $N$ samples counterpart $\frac{1}{N}\sum_{x\in\text{Data}_\am}\delta_x$, with ${\text{Data}_\am}$, samples from $\alpha$, we obtain:
\[
(\path^\cdot_{k})_\sharp\am\approx \frac{1}{N}\sum_{x\in\text{Data}_\am}\delta_{\path^x_{k}}
\]
corresponding to the empirical distribution induced by the displacement incurred up to step $k$. The cost can now be estimated using the data, summing up the lengths of the trajectories induced by the input data: 
\[
\Cdyn(v)  \approx \frac{\Delta t}{N} \sum^K_{k=1}\sum_{x\in\text{Data}_\am} \norm{v_{k}(\path^x_{k})}_p^p
\]

Vector field $v_{k}$ is a function transforming $\path^x_{k}$ into $\path^x_{k+1}$. We parameterize it  for each step $k$ using a neural network of parameters $\theta_k$, denoted $v^{\theta_k}$. The optimization problem then amounts at minimizing the norm of residuals, under constraints:
\begin{mini}
    {\theta}{ \mathcal{C}_d(\theta) = \sum^K_{k=1}\sum_{x\in\text{Data}_\am} \norm{v^{\theta_k}\!(\path^x_{k})}_p^p}{}{}
    \addConstraint{\forall x,\ \path^x_{k+1}}{ = \path^x_{k} + \Delta t \, v^{\theta_k} \!(\path^x_{k})}    \addConstraint{\qquad \qquad \path^\cdot_0=\text{id},}{\ \ (\path^\bdot_1)_\sharp\mathcal{\am} }{=\mathcal{\bm}}
    \label{eq:discrete_opt}
\end{mini}

\paragraph{Enforcing Boundary Conditions.} The forward equation $\path^x_{k+1} = \path^x_{k} + \Delta t \, v^{\theta_k} \!(\path^x_{k})$ is trivially verified, as is $\path^\cdot_0=\text{id}$. This is not the case for the coherence constraint $(\path^\bdot_1)_\sharp\mathcal{\am} =\mathcal{\bm}$ ensuring that input domain $\am$ maps to the target domain $\bm$. In order to implement a numerical algorithm, we optimize the  Lagrangian associated to (\ref{eq:discrete_opt}) (the trivially enforced constraints are not made explicit here), introducing a measure of discrepancy $D$ between output and target domains:
\begin{mini}
    {\theta}{ \mathcal{C}_d(\theta) + \frac{1}{\lambda_i}D\!\left((\path^\bdot_1)_\sharp\am, \, \bm\right)}{}{}
    \label{eq:discrete_unconstrained_opt}
\end{mini}

where the sequence of Lagrange multipliers $(\lambda_i)_i$ converges linearly to $0$ during optimization, ensuring the constraint is met. This optimization problem is solved using stochastic gradient based techniques. As in most approaches for UDT, $D$ may be implemented using generative adversarial networks \cite{fgan}, or any other appropriate measure of discrepancy between samples, such as kernel distances \cite{gretton}, or OT based distances \cite{cuturi}.


\paragraph{Inverse Mapping.} Training is done \textbf{only for the forward equation}. To obtain the inverse mapping after training, the forward equation is solved in reverse mode. This is immediate and simply amounts at iterating ${y_{k-1} = y_{k} - \Delta t \, v^{\theta_k}(y_{k})}$, starting from a sample $y_K$ from $\bm$.

\begin{figure}
    \centering
\includegraphics[width=1.\textwidth]{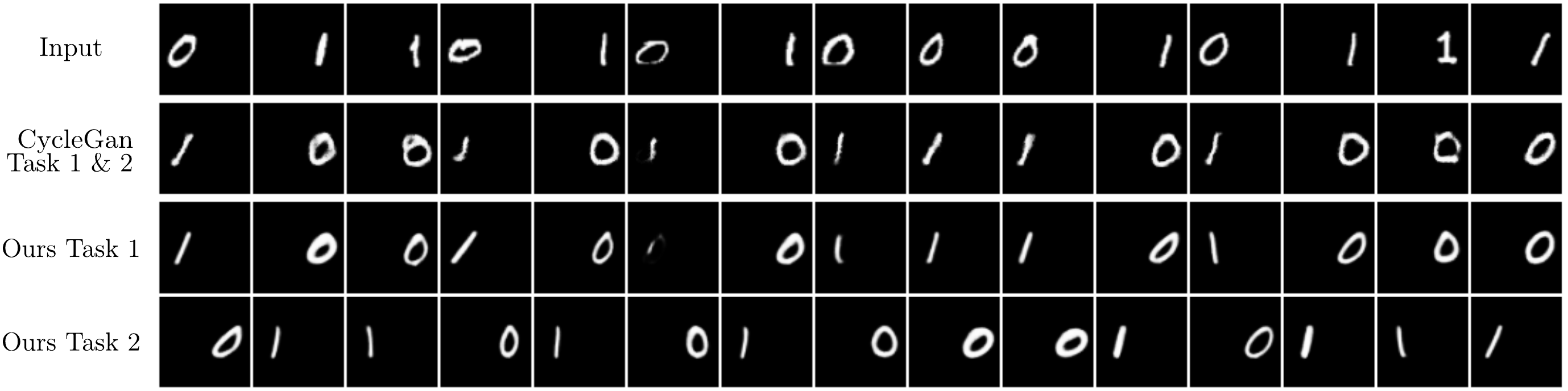}
    \caption{MNIST Digit Swap Task: first row - input, 2nd row - transformation learned by CycleGAN for both tasks, 3rd and 4th row - transformations learned by our model for each task (see text for task definition).}
    \label{fig:mnist}
\end{figure}

\begin{figure}
    \centering
\includegraphics[width=1\textwidth]{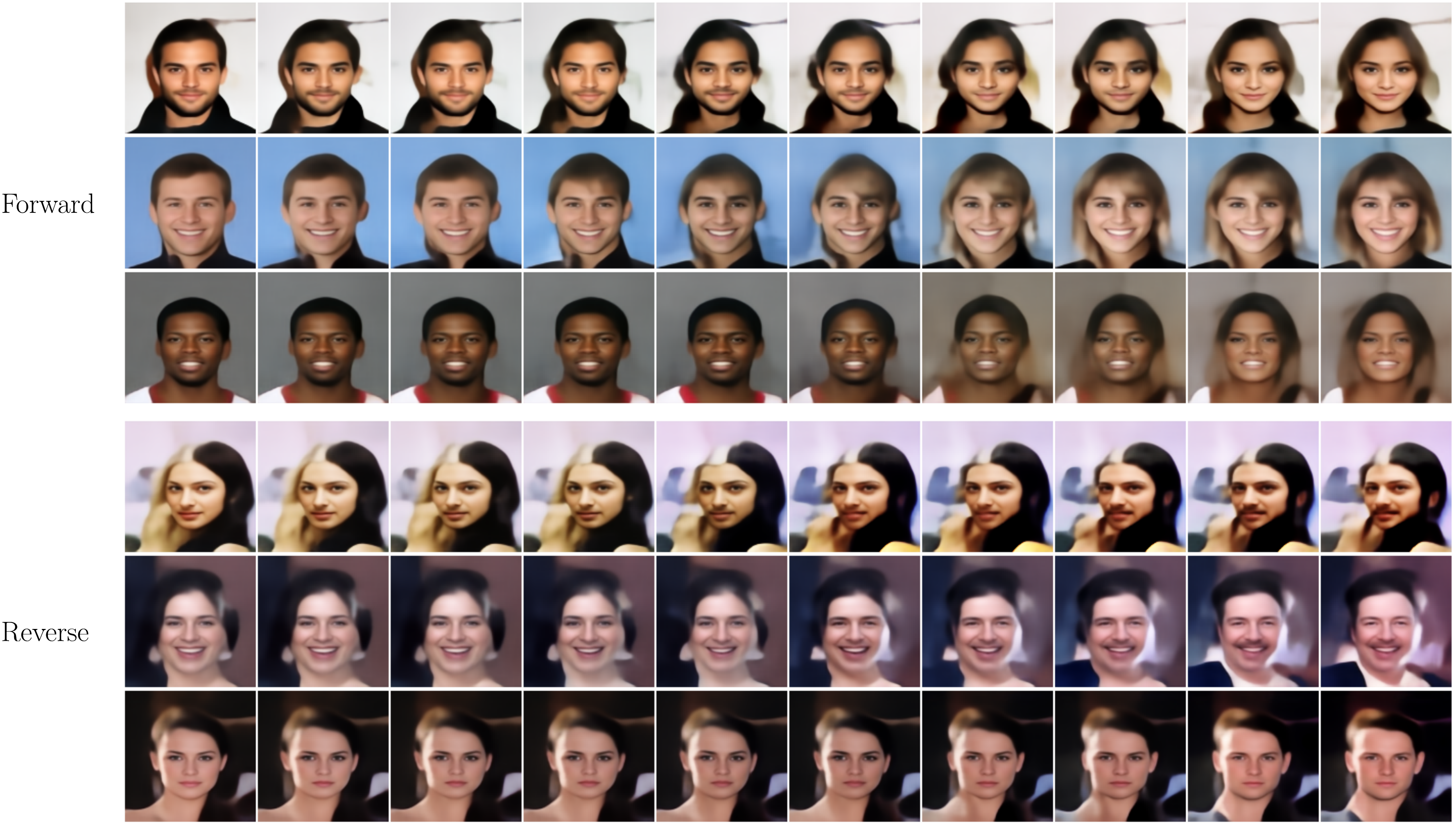}
    \caption{Male to Female translation (top) and the inverse (bottom). Intermediate images correspond to the interpolations provided by the network's intermediate layers. The reverse mapping is obtained by simply inverting the forward network once trained (see text for explanation).
    }
    \label{fig:celeba}
\end{figure}

\paragraph{Transporting in Neural Network Semantic Space.} The RGB image space is not always ideal to work with. For instance, the Resnet architecture used for many image-to-image translation tasks~\cite{pix2pix2016, CycleGAN} encodes the images in latent space, lowering spatial resolution before transporting the image through the residual blocks, and decoding back into RGB space. Instead of directly transporting one domain to another, we first \textbf{embed domains} in a more amenable space, using a pretrained \textbf{encoder-decoder pair}, to make encoding and decoding steps decoupled from the transport. In this \textit{semantic} space, transportation costs considered in Section \ref{sec:dont}, of the form $c(x, y) = \| x -y\|_p^p$ become meaningful: we wish to map one domain onto another, preserving as much semantics as possible. 

To summarize the previously introduced steps, a practical algorithm is proposed  in the supplementary material \ref{supp:details}. 

\section{\label{sec:expes}Experiments}
We illustrate now some interesting properties of our approach on two different settings. Additional exploratory analysis of CycleGan is available in the supplementary material \ref{supp:bias}.

\subsection{MNIST Digit Swap Task}
\label{sec:mnist}
This toy task making use of MNIST data, illustrates some limitations of the CycleGAN family of models and the benefits of using our OT formulation.

\begin{table}[ht]
\centering
\caption{Test-set results on the MNIST Digit Swap Task. Coherence Score: $\%$ of success for \textit{"$\am$ is mapped on $\bm$"}, Semantic Score Task 1 \& 2: $\%$ of success for \textit{"the mapped digit is both correct and at the right position"}.}
\begin{tabular}[t]{l|ccc}
\label{table:mnist}
& Coherence Score & Semantic Score Task 1 & Semantic Score Task 2 \\
\hline
CycleGan& $\mathbf{99.9\%}$ & $\mathbf{99.9\%}$ & $0.00\%$ \\
Ours (Task 1)& $\mathbf{99.9\%}$  & $\mathbf{99.9\%}$ & $0.0\%$\\
Ours (Task 2)& $\mathbf{99.0\%}$  & $15.42.\%$ & $\mathbf{83.4\%}$\\
\end{tabular}
\end{table}

Let us consider here two domains on MNIST digits: the first domain corresponds to $0$s placed on the left of the image and $1$s placed on the right (Fig. \ref{fig:mnist} row 1). For the second, the $0$s are placed on the right and $1$s on the left. In order for the coherence constraint to be verified (Sec. \ref{sec:intro_udt}), we can either swap the class of the digit and keep the position, or swap the position and leave the class unchanged. For the same set of domains, two different tasks are considered:
\begin{itemize}
  \setlength\itemsep{-0.1em}
    \item[--] \textbf{Task 1:} Associate a digit to a digit of the opposite class, keeping the position unchanged.
    \item[--] \textbf{Task 2:} Associate a digit to the same digit on the opposite position,
\end{itemize}

CycleGan trained on both tasks 1 and 2, consistently provides the same solution for both tasks (Fig. \ref{fig:mnist}, row 2). It actually solves Task 1 reasonably well but never solves Task 2. Our method solves task 1, simply using the transport cost $c(x,  y) = \| x -y \|_2^2$ (Fig. \ref{fig:mnist}, row 3). It is also able to solve Task 2 reasonably well (Fig. \ref{fig:mnist}, row 4).
To solve Task 2, we choose an appropriate cost function $c$, and look for the mapping that minimizes the transport \textit{w.r.t.} $c$. For $c$, we first find the components of the latent representations of the images that are the most correlated with the position. For that we a sparse linear classifier to distinguish the digits position, select the features with non 0 weights. We then \textit{turn off} the contribution of these features in the cost function. More specifically, we use $c(x, y) = \sum_i c_i |x_i-y_i|^2$, where $c_i = 0$ if the classifier's weight associated to component $x^i$ is non-zero, and $c_i = 1$ otherwise. We  use a \textit{Coherence Score} (discrepancy between output domain and target), and a \textit{Semantic Score} (how well we solve the given task) for each Task. These scores are obtained from a combination of predictions of two classifiers, trained to predict position, and class. Globally, coherence is achieved by both approaches, while semantic preservation is achieved on both tasks by our approach, and only on task 1 by CycleGAN (see Table \ref{table:mnist} and Fig. \ref{fig:mnist}). For implementations details, please refer to supplementary material Sec. \ref{supp:details}.

\subsection{CelebA Male to Female}
Fig. \ref{fig:celeba} illustrates how the model work for Male to Female translation (forward) and back (reverse) on the CelebA dataset. The images are first encoded using a pretrained encoder-decoder pair onto a $128$ dimensional space. This defines the semantic space on which an OT cost will be used. Using the proposed framework, the optimal forward mapping \textit{w.r.t.} cost $c(x, y) = \| x - y\|^2$ is learned -- without any cycle-consistency constraint. The inverse mapping is recovered after training, using the reverse equation introduced in Section \ref{sec:algorithm}. By decoding the intermediate layers of the neural network using the learned decoder, we obtain smooth image interpolations (columns 2 to 9). The model is able to perform gradually semantically meaningful transformations of the inputs, modifying the clothing, and the facial hair, and high-level attributes. Moreover, the reverse mapping works without having been trained. For additional samples and implementation details, please refer to the supplementary material \ref{supp:details}.


\section{Discussion}


Besides providing a theoretically founded framework for UDT, OT and dynamic OT formulations together with empirical evidence bring some intuition on why current implementations of UDT with CycleGAN related methods work well. All these method seem to minimize implicitely a transport cost as illustrated on the MNIST example above. This method is extremely simple to instantiate on ResNet like architectures. Note that although we have studied here the case of CycleGan and residual networks, the framework introduced in Sec. \ref{sec:udt} is more general. Other architectures like \textit{UNet} \cite{unet} making use of skip connections are also biased towards producing outputs  close to the inputs and can be used as well. This formalism can be used for solving any transport problem using a NN model.



The dynamical framework DONT presented in Section \ref{sec:dont} is not limited to UDT. For example, it could be used to analyze and constrain the dynamics occurring through the different layers of a neural network, in the spirit of \cite{Chen2018}. Note that more general costs than those considered in Section \ref{sec:dont} may be used, with provable guarantees (refer to \cite{figalli}).


Finally, leveraging theoretical and practical advances in OT on UDT and related problems may provide new ways to attack these problems, \textit{e.g.} \textit{Multi-Domain Translation} \cite{multi} is very similar to the problem of \textit{Multi-Marginal} OT (see \cite{cot}), and Many-to-Many Mappings \cite{augmented} can be linked with Entropic OT (\cite{cot}).

\section{Related Work}


Explaining why practical UDT methods work well when the problem is ill posed has motivated some recent worK. \cite{galanti, estimating}, show that coherent mappings are obtained by NNs of minimal \textit{functional complexity}, a notion related to the number of layers. They also prove that the number of such solution mappings is small. However, this notion of complexity does not account for problem dependent semantics. We achieve this by explicitly minimizing a task-dependant cost on a representation space where this cost makes sense. Using our framework, it is then possible to prove the existence and unicity of the solution for a wide range of costs. \cite{galanti, one_sided} show that learning a one-sided mapping is possible, but their formulation does not provide the inverse mapping like ours.

The dynamical approach presented in Sec. \ref{sec:dont} bears similarities with recent work \cite{NODE, ffjord}  making use of the link between residual networks and ODEs \cite{weinan}. Their objective is the design of new neural models and not solving a task like we do. Also different from us, they grid the space, and displace a \textit{known} density (usually Gaussian) along a learned trajectory. This is not possible in our case.

In the domain adaptation field, using Optimal Transport to help a classifier extrapolate has been around for some years, e.g. \cite{courty_OT_domainAdapt, courty2_OT_domainAdapt} use a transport cost to align two distributions. The task, although related is clearly different and so are the method they develop. In particular they do not consider the dynamical aspect of the transformation process and the link with the dynamic of the NNs.

\section{Conclusion}
We have shown that common approaches for UDT are ill-posed, and that in order to solve general UDT problems, information about the task must but be encoded in the problem setting. We propose doing this in a principled way, choosing an adequate representation space together with a transport cost $c$ encoding the semantic relation between elements of both domains in this space, and finding the OT between both domains \textit{w.r.t.} $c$. We then made use of the dynamical formulation of OT to introduce a general method for solving practically with NNs the OT-UDT problem as a constrained optimization on the trajectories of data points between the two domains. This formulation allows us to inject and control the semantics of the translation task via the definition of adequate transport costs. Finally we have illustrated the behavior of our model on some typical examples.


\bibliographystyle{unsrt}
\bibliography{main}

\begin{thebibliography}{10}

\bibitem{pix2pix2016}
Phillip Isola, Jun-Yan Zhu, Tinghui Zhou, and Alexei~A Efros.
\newblock Image-to-image translation with conditional adversarial networks.
\newblock {\em arxiv}, 2016.

\bibitem{vid2vid_paired}
Ting{-}Chun Wang, Ming{-}Yu Liu, Jun{-}Yan Zhu, Guilin Liu, Andrew Tao, Jan
  Kautz, and Bryan Catanzaro.
\newblock Video-to-video synthesis.
\newblock {\em CoRR}, abs/1808.06601, 2018.

\bibitem{stackgan}
Han Zhang, Tao Xu, Hongsheng Li, Shaoting Zhang, Xiaolei Huang, Xiaogang Wang,
  and Dimitris~N. Metaxas.
\newblock Stackgan: Text to photo-realistic image synthesis with stacked
  generative adversarial networks.
\newblock {\em CoRR}, abs/1612.03242, 2016.

\bibitem{trad}
Dzmitry Bahdanau, Kyunghyun Cho, and Yoshua Bengio.
\newblock Neural machine translation by jointly learning to align and
  translate.
\newblock In {\em 3rd International Conference on Learning Representations,
  {ICLR} 2015, San Diego, CA, USA, May 7-9, 2015, Conference Track
  Proceedings}, 2015.

\bibitem{CycleGAN}
Jun{-}Yan Zhu, Taesung Park, Phillip Isola, and Alexei~A. Efros.
\newblock Unpaired image-to-image translation using cycle-consistent
  adversarial networks.
\newblock {\em CoRR}, abs/1703.10593, 2017.

\bibitem{galanti}
Tomer Galanti, Lior Wolf, and Sagie Benaim.
\newblock The role of minimal complexity functions in unsupervised learning of
  semantic mappings.
\newblock In {\em 6th International Conference on Learning Representations,
  {ICLR} 2018, Vancouver, BC, Canada, April 30 - May 3, 2018, Conference Track
  Proceedings}, 2018.

\bibitem{esther}
Chao Yang, Taehwan Kim, Ruizhe Wang, Hao Peng, and C.{-}C.~Jay Kuo.
\newblock {ESTHER:} extremely simple image translation through
  self-regularization.
\newblock In {\em British Machine Vision Conference 2018, {BMVC} 2018,
  Northumbria University, Newcastle, UK, September 3-6, 2018}, page 110, 2018.

\bibitem{unsup_trans}
Guillaume Lample, Alexis Conneau, Ludovic Denoyer, and Marc'Aurelio Ranzato.
\newblock Unsupervised machine translation using monolingual corpora only.
\newblock In {\em 6th International Conference on Learning Representations,
  {ICLR} 2018, Vancouver, BC, Canada, April 30 - May 3, 2018, Conference Track
  Proceedings}, 2018.

\bibitem{sr_gan}
Yuan Yuan, Siyuan Liu, Jiawei Zhang, Yongbing Zhang, Chao Dong, and Liang Lin.
\newblock Unsupervised image super-resolution using cycle-in-cycle generative
  adversarial networks.
\newblock In {\em 2018 {IEEE} Conference on Computer Vision and Pattern
  Recognition Workshops, {CVPR} Workshops 2018, Salt Lake City, UT, USA, June
  18-22, 2018}, pages 701--710, 2018.

\bibitem{speech2text}
Yu{-}An Chung, Wei{-}Hung Weng, Schrasing Tong, and James Glass.
\newblock Unsupervised cross-modal alignment of speech and text embedding
  spaces.
\newblock In {\em Advances in Neural Information Processing Systems 31: Annual
  Conference on Neural Information Processing Systems 2018, NeurIPS 2018, 3-8
  December 2018, Montr{\'{e}}al, Canada.}, pages 7365--7375, 2018.

\bibitem{stargan}
Yunjey Choi, Min{-}Je Choi, Munyoung Kim, Jung{-}Woo Ha, Sunghun Kim, and
  Jaegul Choo.
\newblock Stargan: Unified generative adversarial networks for multi-domain
  image-to-image translation.
\newblock In {\em 2018 {IEEE} Conference on Computer Vision and Pattern
  Recognition, {CVPR} 2018, Salt Lake City, UT, USA, June 18-22, 2018}, pages
  8789--8797, 2018.

\bibitem{bridging}
Yingtao Tian and Jesse Engel.
\newblock Latent translation: Crossing modalities by bridging generative
  models.
\newblock {\em CoRR}, abs/1902.08261, 2019.

\bibitem{Santambrogio}
Filippo Santambrogio.
\newblock {\em Optimal transport for Applied Mathematicians: Calculus of
  Variations, PDEs and Modeling}.
\newblock 2015.

\bibitem{he_deep_2016}
Kaiming He, Xiangyu Zhang, Shaoqing Ren, and Jian Sun.
\newblock Deep residual learning for image recognition.
\newblock In {\em 2016 {IEEE} Conference on Computer Vision and Pattern
  Recognition, {CVPR} 2016, Las Vegas, NV, USA, June 27-30, 2016}, pages
  770--778, 2016.

\bibitem{weinan}
Weinan E.
\newblock A proposal on machine learning via dynamical systems.
\newblock {\em Communications in Mathematics and Statistics}, 5:1--11, 02 2017.

\bibitem{NODE}
Tian~Qi Chen, Yulia Rubanova, Jesse Bettencourt, and David~K. Duvenaud.
\newblock Neural ordinary differential equations.
\newblock {\em CoRR}, abs/1806.07366, 2018.

\bibitem{fgan}
Sebastian Nowozin, Botond Cseke, and Ryota Tomioka.
\newblock f-gan: Training generative neural samplers using variational
  divergence minimization.
\newblock In {\em Advances in Neural Information Processing Systems 29: Annual
  Conference on Neural Information Processing Systems 2016, December 5-10,
  2016, Barcelona, Spain}, pages 271--279, 2016.

\bibitem{gretton}
Arthur Gretton, Karsten~M. Borgwardt, Malte~J. Rasch, Bernhard Sch{\"{o}}lkopf,
  and Alexander~J. Smola.
\newblock A kernel two-sample test.
\newblock {\em Journal of Machine Learning Research}, 13:723--773, 2012.

\bibitem{cuturi}
Marco Cuturi.
\newblock Sinkhorn distances: Lightspeed computation of optimal transport.
\newblock In {\em Advances in Neural Information Processing Systems 26: 27th
  Annual Conference on Neural Information Processing Systems 2013. Proceedings
  of a meeting held December 5-8, 2013, Lake Tahoe, Nevada, United States.},
  pages 2292--2300, 2013.

\bibitem{unet}
Olaf Ronneberger, Philipp Fischer, and Thomas Brox.
\newblock U-net: Convolutional networks for biomedical image segmentation.
\newblock {\em CoRR}, abs/1505.04597, 2015.

\bibitem{Chen2018}
Ricky T~Q Chen, Yulia Rubanova, Jesse Bettencourt, and David Duvenaud.
\newblock {Neural Ordinary Differential Equations}.
\newblock In {\em NIPS}, 2018.

\bibitem{figalli}
Alessio Figalli.
\newblock {\em Optimal transportation and action-minimizing measures}.
\newblock Tesi 8. Ed. della normale, Pisa, 2008.

\bibitem{multi}
Jun{-}Yan Zhu, Richard Zhang, Deepak Pathak, Trevor Darrell, Alexei~A. Efros,
  Oliver Wang, and Eli Shechtman.
\newblock Toward multimodal image-to-image translation.
\newblock In {\em Advances in Neural Information Processing Systems 30: Annual
  Conference on Neural Information Processing Systems 2017, 4-9 December 2017,
  Long Beach, CA, {USA}}, pages 465--476, 2017.

\bibitem{cot}
Gabriel Peyré and Marco Cuturi.
\newblock Computational {Optimal} {Transport}.
\newblock {\em arXiv:1803.00567 [stat]}, March 2018.
\newblock arXiv: 1803.00567.

\bibitem{augmented}
Amjad Almahairi, Sai Rajeshwar, Alessandro Sordoni, Philip Bachman, and
  Aaron~C. Courville.
\newblock Augmented cyclegan: Learning many-to-many mappings from unpaired
  data.
\newblock In {\em Proceedings of the 35th International Conference on Machine
  Learning, {ICML} 2018, Stockholmsm{\"{a}}ssan, Stockholm, Sweden, July 10-15,
  2018}, pages 195--204, 2018.

\bibitem{estimating}
Sagie Benaim, Tomer Galanti, and Lior Wolf.
\newblock Estimating the success of unsupervised image to image translation.
\newblock In {\em Computer Vision - {ECCV} 2018 - 15th European Conference,
  Munich, Germany, September 8-14, 2018, Proceedings, Part {V}}, pages
  222--238, 2018.

\bibitem{one_sided}
Sagie Benaim and Lior Wolf.
\newblock One-sided unsupervised domain mapping.
\newblock In {\em Advances in Neural Information Processing Systems 30: Annual
  Conference on Neural Information Processing Systems 2017, 4-9 December 2017,
  Long Beach, CA, {USA}}, pages 752--762, 2017.

\bibitem{ffjord}
Will Grathwohl, Ricky T.~Q. Chen, Jesse Bettencourt, Ilya Sutskever, and
  David~K. Duvenaud.
\newblock {FFJORD:} free-form continuous dynamics for scalable reversible
  generative models.
\newblock {\em CoRR}, abs/1810.01367, 2018.

\bibitem{courty_OT_domainAdapt}
Nicolas Courty, R{\'{e}}mi Flamary, Devis Tuia, and Alain Rakotomamonjy.
\newblock Optimal transport for domain adaptation.
\newblock {\em CoRR}, abs/1507.00504, 2015.

\bibitem{courty2_OT_domainAdapt}
Bharath~Bhushan Damodaran, Benjamin Kellenberger, R{\'{e}}mi Flamary, Devis
  Tuia, and Nicolas Courty.
\newblock Deepjdot: Deep joint distribution optimal transport for unsupervised
  domain adaptation.
\newblock {\em CoRR}, abs/1803.10081, 2018.

\bibitem{gaussianization}
Ramesh A.~Gopinath Scott Shaobing~Chen.
\newblock Gaussianization.
\newblock 2001.

\end{thebibliography}

\clearpage
\newpage
\newpage
\appendix

\section{Implementation Details}
\label{supp:details}
\vspace{-0.1cm}
We provide below a general description of the algorithm corresponding to our method. Given two unpaired datasets, one first pre-trains an encoder-decoder and uses the obtained encoding as the new data representation to be used for training our model. This learned representation is more suitable than initial raw data (e;g. images) for representing the semantics and defining relevant OT cost functions. Using the transformed dataset, one then proceeds to training using a mini-batch gradient procedure. First the forward equation is solved, which corresponds to a classical  forward pass through the model. The loss in eq. \ref{eq:discrete_unconstrained_opt} is then computed and a gradient step is performed on all the model parameters. The Lagrangian coefficients are then updated as indicated in section \ref{sec:algorithm} in order to satisfy the constraints of the optimization problem when training ends.

\begin{algorithm}[ht!]
\caption{Training Procedure}
\begin{algorithmic}
\linespread{1.35}\selectfont
\label{algo}
  \STATE {\bfseries Input:} Dataset of unpaired images $(I_\sA$, $I_\sB)$, sampled from $(\am$, $\bm)$, \\
  Initial coefficient $\lambda_0$, decay parameter $d$, initial parameters $\theta$
  \STATE Pretrain Encoder $E$ and decoder $D$
  \STATE Make dataset of encodings $(x = E(I_\sA)$, $y = E(I_\sB))$
  \FOR{$i = 1, \dots, M$}
    \STATE Randomly sample a mini-batch of $x$, $y$ 
    \STATE Solve forward equation $\path^x_{k+1} = \path^x_{k} + \Delta t \, v^{\theta_k} \!(\path^x_{k})$ , starting from $\path^x_{0} = x$
    \STATE Estimate loss $\mathcal{L} = \mathcal{C}_d(\theta) + \frac{1}{\lambda_i}D\!\left((\path^\bdot_1)_\sharp\am, \, \bm\right)$ on mini-batch
    \STATE Compute gradient $\frac{\dm \mathcal{L}}{\dm \theta}$ backpropagating through forward equation
    \STATE Update $\theta$ in the steepest descent direction
    \STATE $\lambda_{i+1} \gets \max( \lambda_i - d, 0)$
  \ENDFOR
\STATE {\bfseries Output:} Learned parameters $\theta$.
\end{algorithmic}
\end{algorithm}
\vspace{-0.2cm}

\paragraph{Architectures.}
Implementation is performed via DCGAN and ResNet architectures as described below.
For the Encoder, we use a standard DCGAN architecture\footnote{\url{https://github.com/pytorch/examples/tree/master/dcgan}}, augmenting it with $2$ self-attention layers, mapping the images to a fixed, $128$ dimensional latent vector. For the Decoder, we use residual up-convolutions, also augmented with $2$ self-attention layers.

For the transportation, we use residual blocks very similar to those in the Resnet architecture proposed in CycleGan\footnote{\url{https://github.com/junyanz/pytorch-CycleGAN-and-pix2pix}}: we have 9 residual blocks, each consisting of a linear layer, batch normalization, a non-linearity, and a final linear layer.

The discrepancy $D$ is implemented using generative adversarial networks, although we have observed interesting results with other metrics, \textit{e.g.} using Sinkhorn Distances \cite{cuturi} or MMD \cite{gretton}. For the discriminator, we have used a simple MLP architecture of depth $3$, consisting of linear layers with spectral normalization, and LeakyReLU($p=0.2)$. 

\paragraph{Hyperameters.} We have considered latent dimensions of size $128$, the initial coefficient $\lambda_0 = 1$, and the decay factor is set depending on the number of total iterations $M$, so as to be zero on the final iteration. Throughout all the experiments, we use the Adam optimizer with $\beta_1=0.5$ and $\beta_2=0.999$.

\paragraph{MNIST Digit Swap Task}Architectures and hyperparameters are the ones presented above. Our dataset is made of $64 \times 64$ images. We have placed resized $32 \times 32$ MNIST $0$ and $1$ digits, either on the left or on the right, depending on the domain. For training, each training domain is made of $5000$ digits of each class. For the test, we use all the $0$ and $1$ digits available in MNIST.

\paragraph{Celeba Male to Female Translation.}
Architectures and hyperparameters are the ones presented above. Our dataset is the CelebA dataset, resizing images to $128 \times 128$ pixels, without any additional transformation.

\clearpage
\newpage

\section{On of the Effect of Initialization Gain on UDT and Transport}
\label{supp:bias}

We have seen that standard approaches for UDT yield numerous (bad) solutions in theory, although this seems not to be a problem in practice. These methods must yield an inductive bias, effectively restricting the space of solutions. In this Section, using simple tasks (that nevertheless reflect usual UDT) , we attempt to characterize this bias. We find that:

\begin{itemize}
    \item[--] Accuracy of these methods is highly correlated with initialization gain of network parameters: as gain is increased, accuracy decreases.
    \item[--] Large initialization gain leads to high transportation \textit{w.r.t.} the Euclidean cost,
\end{itemize}

From this, it is possible to make the link between accuracy of the mapping and transportation cost.

\begin{figure}[htb!]
\centering
\includegraphics[width=.9\textwidth]{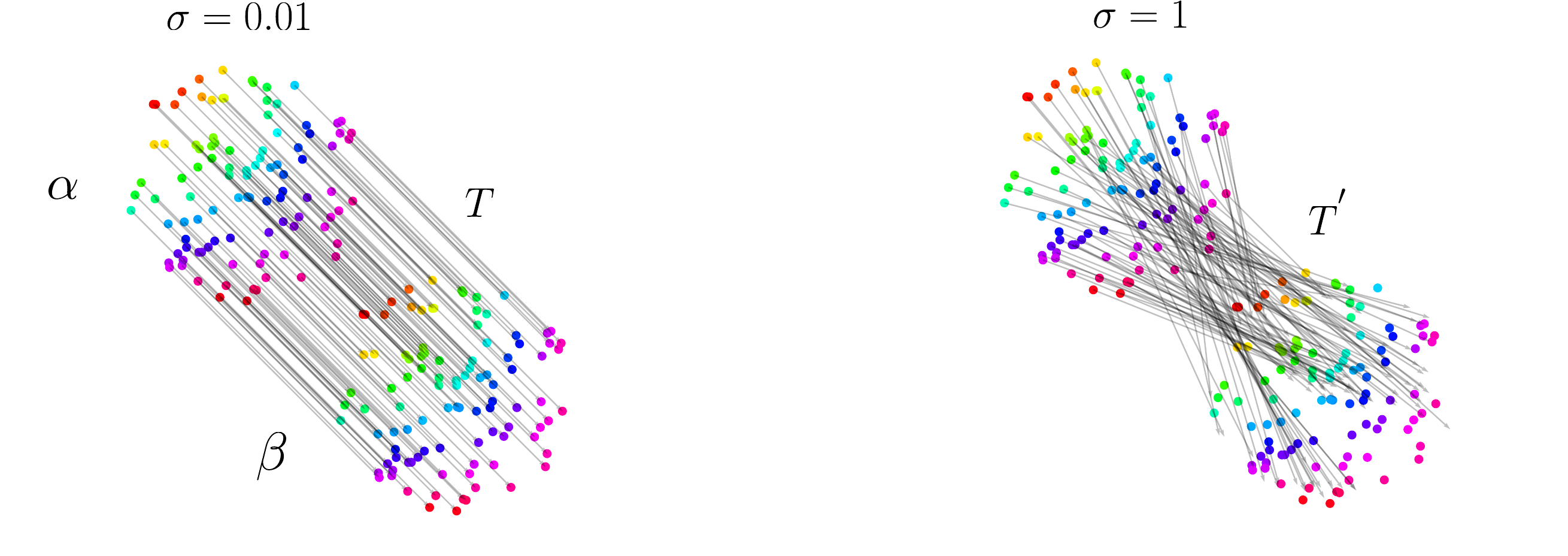}
    \caption{Large initialization ($\sigma=1$) leads to low quality pairing.}
    \label{fig:toy_transport}
\end{figure}

Consider the Toy Task in Fig. \ref{fig:toy_transport}: samples from a simple distribution $\am$ have been generated, and a simple transformation is applied to them to produce the target domain $\bm$. The goal is to retrieve the pairing between samples (\textit{i.e.} link samples with the same color). We have observed that keeping the original hyperparameters (small $\sigma$), CycleGan retrieves the accurate pairing (Fig. \ref{fig:toy_transport}, left). However, when the initialization gain of the mapping's parameters is augmented from $\sigma=0.01$ to $\sigma=1$, we no longer retrieve the accurate pairing ((Fig. \ref{fig:toy_transport}, right).

\begin{figure}[htb!]
\centering
\includegraphics[width=1.\textwidth]{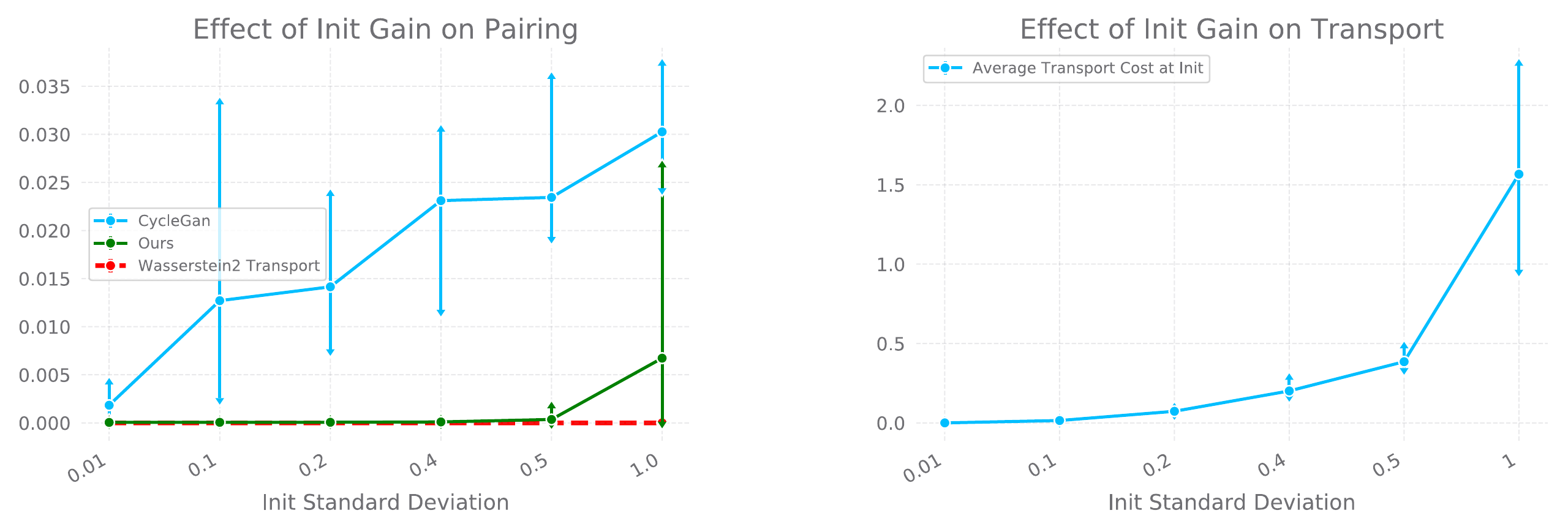}
    \caption{Influence of Initialization Gain on recovered pairings, and transport cost.}
    \label{fig:toy_transport2}
\end{figure}



This effect is quantified in Fig. \ref{fig:toy_transport2}. For the above example, the left curve represents the evolution of the pairing loss (Euclidean distance) between inputs and targets as initialization gain augments. Not only does the average distance between output and target sample augment, but so does the variance (results are obtained on 5 different runs for each hyperparameter). The right curve illustrates the evolution of the transportation \textit{w.r.t.} the Euclidean cost.  We observe that for high initialization gains, the transport cost at initialization is high and here too subject to an important variance. This highlights the link between between accurate pairings and transportation cost. Note that we have also tested our approach explicitly minimizing the transportation, which yields accurate results even for large  initialization gains (Fig. \ref{fig:toy_transport2})

\begin{figure}[bht!]
    \centering
    \includegraphics[width=1\textwidth]{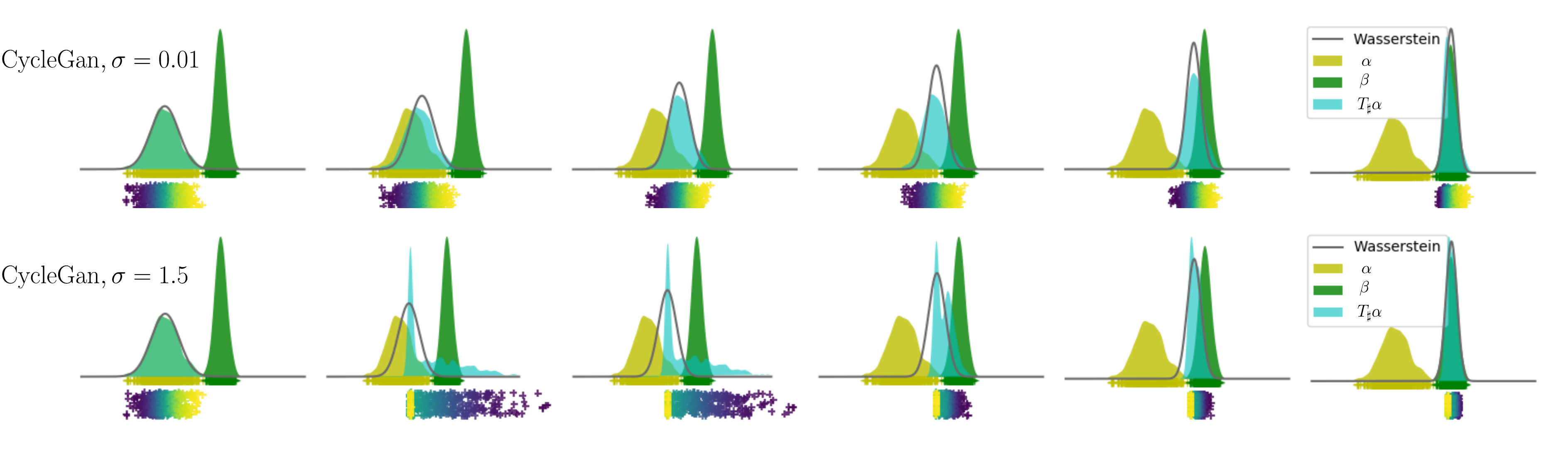}
    \caption{Hidden dynamics of CycleGan, for a simple task of mapping Gaussian $\am$ (leftmost yellow density on each plot) to $\bm$ (rightmost green density on each plot), for two different initialization gains: $\sigma=0.01$ and $\sigma=1.5$. A residual architecture of 5 blocks is trained on this UDT task.The evolution of the densities across the 5 layers are shown (2nd to 5th plot on each row) (light blue density). We also plot the optimal displacement of the density, \textit{w.r.t.} Wassertein2 (blue outlined curve on each plot).}
    \label{fig:toy_1d_dynamic}
\end{figure}

In a second series of experiments, one illustrates the inner dynamics of the residual architecture used for a 1d toy task, mapping a Gaussian distribution to another with different means and variances (Fig. \ref{fig:toy_1d_dynamic}). The task is described in the legend of the figure. For a small initialization (top row), we observe an interesting similarity with the optimal dynamic transport (using $c(x, y)=\|x-y\|^2$). For a large initialization, the inner dynamics are chaotic and the pairing is lost.

\section{Additional Theory}
\subsection{A Proof of the Ill-Posedness of the CycleGan method}
\label{supp:wellpos}

Here we prove the ill-posedness of the problem, in the case where domains $\am$ and $\bm$ are continuous, by constructing an infinite family of solutions:
\vspace{0.1cm}

\begin{prop}
For any two absolutely continuous distributions $\am$ and $\bm$, there exists an infinite number of transformations $T$ and $S$ satisfying the inversibility and coherence constraints. 
\end{prop}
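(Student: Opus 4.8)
The plan is to reduce the construction of infinitely many solution pairs to the construction of infinitely many measure-preserving automorphisms of $\am$, and then to produce such automorphisms explicitly. First I would establish a \emph{composition principle}: if $(T_0, S_0)$ is any single pair satisfying coherence and invertibility (so $S_0$ is the a.s.-inverse of $T_0$), and if $R \colon \sA \to \sA$ is any a.s.-invertible map with $R_\sharp \am = \am$, then the pair
\begin{equation*}
(T, S) := (T_0 \circ R,\ R^{-1} \circ S_0)
\end{equation*}
again satisfies both constraints. Coherence is immediate from functoriality of the push-forward: $T_\sharp \am = (T_0)_\sharp (R_\sharp \am) = (T_0)_\sharp \am = \bm$, and symmetrically $S_\sharp \bm = (R^{-1})_\sharp (S_0)_\sharp \bm = (R^{-1})_\sharp \am = \am$. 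For invertibility I would write $S \circ T = R^{-1} \circ (S_0 \circ T_0) \circ R$ and use that $S_0 \circ T_0 \overset{\am-a.s.}{=} \id$ together with the fact that $R$ preserves $\am$, so the $\am$-null exceptional set pulls back to an $\am$-null set; the computation $T \circ S = T_0 \circ (R \circ R^{-1}) \circ S_0 = T_0 \circ S_0 \overset{\bm-a.s.}{=} \id$ is analogous.

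For a base pair $(T_0, S_0)$ I would simply invoke the Theorem stated above: since $\am, \bm$ are absolutely continuous, the quadratic cost $c(x,y) = \|x-y\|^2$ has the form $h(x-y)$ with $h$ strictly convex, so the theorem already furnishes a coherent and invertible optimal pair $(T_0, S_0)$ with $S_0$ the a.s.-inverse of $T_0$. This same statement supplies the $\am$-a.s. injectivity of $T_0$ that I will need for distinctness below.

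It then remains to exhibit infinitely many a.s.-invertible, $\am$-preserving maps $R$. Here I would use that an absolutely continuous measure is non-atomic, so $(\sA, \am)$ is isomorphic mod $0$ to $([0,1], \mathrm{Leb})$ via some essentially invertible measurable $\Phi$ with $\Phi_\sharp \am = \mathrm{Leb}$. Transporting the rotation family $r_\theta(u) = u + \theta \bmod 1$ yields $R_\theta := \Phi^{-1} \circ r_\theta \circ \Phi$, which is a.s.-invertible and satisfies $(R_\theta)_\sharp \am = \am$ for every $\theta \in [0,1)$, giving an uncountable family. Since $T_0$ is $\am$-a.s. injective and $r_{\theta_1} \neq r_{\theta_2}$ everywhere when $\theta_1 \neq \theta_2$, distinct parameters produce $\am$-a.s. distinct maps $T_\theta = T_0 \circ R_\theta$, so the resulting pairs $(T_\theta, S_\theta)$ are genuinely distinct, which proves the claim.

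I expect the main obstacle to be measure-theoretic bookkeeping rather than any single hard idea: making precise the mod-$0$ isomorphism $\Phi$ and verifying that the various null exceptional sets behave correctly under composition, so that all the a.s. identities survive, and confirming that distinct rotations really yield a.s.-distinct automorphisms rather than coinciding on a full-measure set. A more hands-on alternative that avoids the abstract isomorphism would be to partition each support into $2n$ blocks of equal $\am$-mass and swap them pairwise, producing a distinct automorphism for each $n$; this stays elementary and in the spirit of the ``beach/mountain swap'' example of Section~\ref{sec:intro_udt}, at the cost of yielding only a countable family.
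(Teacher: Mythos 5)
Your proof is correct and follows essentially the same strategy as the paper's: both generate the infinite family by conjugating a one-parameter family of rotations through a map that normalizes $\am$ to a reference measure and composing with a fixed valid pair --- the paper gaussianizes and rotates the Gaussian in $\R^d$, while you use the mod-$0$ isomorphism with $([0,1],\mathrm{Leb})$ and circle rotations. Your version has the minor advantages of working verbatim in dimension $d=1$ (where no nontrivial zero-centered rotations exist) and of explicitly verifying that distinct parameters yield $\am$-a.s.\ distinct maps, a point the paper leaves implicit.
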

\begin{proof}
From \cite{gaussianization} and the hypothesis that $\am$ and $\bm$ have densities w.r.t. the Lebesgue measure, we can take $G_{\am}$ and $G_{\bm}$ be transformations that gaussianize measures $\am$ and $\bm$ respectively. Let $R_\theta$ be a zero-centered  rotation of an arbitrary rotation angle $\theta$. It is clear that this transformation is measure-preserving for the Gaussian measure. Then, for any $\theta$, $T$ such that $T=G_{\bm}^{-1} \circ R_\theta \circ G_{\am}$ and its inverse $S$ verify the listed properties.
\end{proof}

\subsection{\label{app:dynot}From the Monge problem to Dynamical Optimal Transport}

Instead of directly pushing $\am$ to $\bm$ in $\R^d$, it is possible to view $\am$ and $\bm$ as points in a space of measures, and consider trajectories from $\am$ to $\bm$ in this abstract space. Thus, a way to transport the probability mass from $\am$ to $\bm$ is a curve between two points in this space. The curve corresponding to the optimal mapping is then the \textit{shortest} one, in other words it is the \textit{geodesic curve} between the two points.

More formally, let us introduce the \textit{Wasserstein metric space} $\mathbb{W}_p(\R^d)$, \textit{i.e.} the space of absolutely continuous measures of $\R^d$ with finite $p$-th moment endowed with the Wasserstein distance:
\[
W_p(\mu,\nu) = \min_{T_\sharp \mu=\nu} \C(T)^{\frac{1}{p}}
\]
when costs of the form $c(x,y) = \|x-y\|^p_p$ are considered, for some integer $p\geq2$. As $\mathbb{W}_p(\R^d)$ is a space of measures, $\am$ and $\bm$ are seen as points of this space of measures, and thus, any continuous path linking both distributions defines a gradual transformation from $\am$ to $\bm$ and a mapping transporting $\am$ to $\bm$.

The following result~(from Theorem 5.27 of \cite{Santambrogio}) motivates the dynamical formulation of OT:
\begin{prop}
$\mathbb{W}_p$ is a geodesic space, meaning that, for any measures $\mu, \nu\in\mathbb{W}_p$, there exists a geodesic curve $(\mu_t)_{ t \in [0, 1]}$ between $\mu$ and $\nu$.
\end{prop}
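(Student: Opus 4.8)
The plan is to exhibit an explicit geodesic through \emph{displacement interpolation} (McCann's construction), leveraging the optimal map guaranteed by the existence--uniqueness theorem stated above. Fix $\mu,\nu\in\mathbb{W}_p$. With the cost $c(x,y)=\norm{x-y}_p^p = h(x-y)$, where $h(z)=\norm{z}_p^p$ is strictly convex for $p\geq 2$, and $\mu$ absolutely continuous, that theorem supplies a unique optimal map $T$ with $T_\sharp\mu=\nu$ and $\C(T)=W_p(\mu,\nu)^p$. I would then set $T_t := (1-t)\,\id + t\,T$ and define the candidate curve $\mu_t := (T_t)_\sharp\mu$ for $t\in[0,1]$, which satisfies $\mu_0=\mu$ and $\mu_1=\nu$ by construction.

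The heart of the argument is to verify that $(\mu_t)_{t\in[0,1]}$ is a constant-speed geodesic, i.e.\ $W_p(\mu_s,\mu_t)=|t-s|\,W_p(\mu,\nu)$. For the upper bound I would use $(T_s,T_t)_\sharp\mu$ as an admissible (sub-optimal) coupling of $\mu_s$ and $\mu_t$; since $T_s(x)-T_t(x)=(t-s)\bigl(x-T(x)\bigr)$, this gives
\begin{equation*}
W_p(\mu_s,\mu_t)^p \;\le\; \int_{\R^d}\norm{T_s(x)-T_t(x)}_p^p\,\dm\mu(x) \;=\; |t-s|^p\int_{\R^d}\norm{x-T(x)}_p^p\,\dm\mu(x) \;=\; |t-s|^p\,W_p(\mu,\nu)^p .
\end{equation*}
For the matching lower bound I would apply the triangle inequality for $W_p$: for $0\le s\le t\le 1$,
\begin{equation*}
W_p(\mu,\nu)\;\le\; W_p(\mu,\mu_s)+W_p(\mu_s,\mu_t)+W_p(\mu_t,\nu)\;\le\;\bigl(s+(t-s)+(1-t)\bigr)W_p(\mu,\nu)\;=\;W_p(\mu,\nu),
\end{equation*}
where the middle step applies the upper bound to each term. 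The chain of inequalities is therefore a chain of equalities, which pins down $W_p(\mu_s,\mu_t)=|t-s|\,W_p(\mu,\nu)$ and establishes the geodesic property.

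Finally I would check that the curve genuinely lies in $\mathbb{W}_p$. Finiteness of the $p$-th moment of each $\mu_t$ is immediate from convexity of $z\mapsto\norm{z}_p^p$ together with the finite moments of $\mu$ and $\nu$. The delicate point --- and the step I expect to be the main obstacle --- is the \emph{absolute continuity} of the intermediate measures $\mu_t$ for $t\in(0,1)$, since the push-forward by $T_t$ could \textit{a priori} concentrate mass on a lower-dimensional set. This is controlled by the monotonicity structure of the optimal map (in the quadratic case $T=\nabla\psi$ with $\psi$ convex, so $T_t$ is injective with a density-bounding inverse, and the analogous regularity of $c$-optimal maps handles general strictly convex $h$). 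Rather than reprove this regularity, I would invoke it directly from Theorem~5.27 of \cite{Santambrogio}, which is precisely the imported result; I would also note that this same curve is what the dynamical formulation \eqref{eq:dyn} recovers, tying the construction back to the rest of the paper.
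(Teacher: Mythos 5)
Your proof is correct in substance, but note that the paper itself offers no proof of this proposition at all: it simply imports the statement from Theorem 5.27 of \cite{Santambrogio}. What you have written out is essentially the content behind that citation, namely McCann's displacement interpolation $\mu_t=((1-t)\id+tT)_\sharp\mu$, the upper bound $W_p(\mu_s,\mu_t)\le|t-s|\,W_p(\mu,\nu)$ via the interpolated coupling, and the triangle-inequality squeeze forcing equality. This is the standard argument and it buys something the bare citation does not: an explicit geodesic given by a family of maps, which is exactly the object the discretization of Section 3 parameterizes. Two caveats. First, for $0<s<t$ you bound $W_p(\mu_s,\mu_t)$ by the cost of the coupling $(T_s,T_t)_\sharp\mu$, but the paper defines $W_p$ through the Monge formulation, so you should either pass through Kantorovich (and note the two coincide for absolutely continuous marginals) or observe that $T_s=(1-s)\id+sT$ is $\mu$-a.e.\ injective for $s<1$ (a standard consequence of $c$-cyclical monotonicity for strictly convex $h$), so that $T_t\circ T_s^{-1}$ is a genuine transport map from $\mu_s$ to $\mu_t$ with the stated cost. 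Second, your final appeal to ``Theorem 5.27'' for the absolute continuity of the intermediate $\mu_t$ is circular as written, since that is the very result being established; the correct reference is the separate regularity statement that displacement interpolants of absolutely continuous measures remain absolutely continuous (in Santambrogio this is proved from the injectivity and Jacobian bounds of $T_t$, Theorem 7.21 and its surroundings), which you should cite or sketch instead.
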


Thus, according to this result, finding the optimal mapping between two distributions amounts to finding a curve of minimal length in a certain abstract measure space. However, it still does not provide much in the way of a practically useful algorithm. The following theorem makes a formal link with fluid dynamics and basically states that moving probability masses from one distribution to another is the same as moving fluid densities from one configuration to another under a certain velocity field~\cite{Santambrogio}:
\begin{theorem} \label{thm:dynamic}
Given $\am$ and $\bm$ absolutely continuous \textit{w.r.t.} the Lebesgue measure and $(\mu_t)_{t \in [0, 1]}$ the geodesic curve with $\mu_0 = \am$ and $\mu_1 = \bm$, we can associate a vector field $v_t\in L^p(\mu_t)$ that solves the \textit{continuity equation}\footnote{$\partial_t$ is the partial derivative operator \textit{w.r.t.} variable $t$, and $\nabla \cdot$ the divergence operator \textit{w.r.t.} space.}:
\[
\partial_t\mu_t + \nabla\cdot(\mu_t v_t) = 0
\]
with:
\[
W^p_p(\am,\bm) = \int_0^1\|v_t\|^p_{L^p(\mu_t)}\mathrm{dt}
\]
\end{theorem}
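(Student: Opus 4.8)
The statement is the Benamou--Brenier identification of the static and dynamical transport costs, and the plan is to make it concrete by exhibiting the velocity field that transports mass along the displacement interpolation. First I would invoke Theorem 1 with $c(x,y)=\norm{x-y}^p$ (strictly convex for $p\ge 2$) to obtain the unique optimal map $T$ with $T_\sharp\am=\bm$ and $\C(T)=W_p^p(\am,\bm)$. Setting $T_t:=(1-t)\,\id+t\,T$, I would take as candidate geodesic McCann's displacement interpolation $\mu_t:=(T_t)_\sharp\am$ and identify it with the constant-speed geodesic furnished by the preceding proposition. The upper bound $W_p(\mu_s,\mu_t)\le|t-s|\,W_p(\am,\bm)$ follows by using the plan $(T_s,T_t)_\sharp\am$ as a competitor, since it transports $\mu_s$ to $\mu_t$ at cost $|t-s|^p\int\norm{T(x)-x}^p\,\dm\am$; the triangle inequality $W_p(\am,\bm)\le W_p(\am,\mu_s)+W_p(\mu_s,\mu_t)+W_p(\mu_t,\bm)$ then forces equalities, so $(\mu_t)$ is a constant-speed geodesic.

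Next I would build the Eulerian velocity field by pushing forward the (time-independent) Lagrangian velocity of each particle. A particle issued from $x$ travels along the straight line $t\mapsto T_t(x)$ with velocity $T(x)-x$, so I define $v_t$ implicitly through $v_t(T_t(x))=T(x)-x$. The point requiring care is well-definedness $\mu_t$-a.e.: for $t\in[0,1)$ the maps $T_t$ are injective because optimal transport rays do not cross (a consequence of the cyclical monotonicity of the support of the optimal plan), so $v_t$ is unambiguously determined off a $\mu_t$-null set. With $v_t$ in hand I would verify the continuity equation weakly: for any $\varphi\in C_c^\infty(\R^d)$, differentiating under the integral sign (legitimate since $\sA$ is compact and $\varphi$ is smooth with compact support),
\[
\frac{\dm}{\dm t}\int\varphi\,\dm\mu_t=\int\nabla\varphi(T_t(x))\cdot(T(x)-x)\,\dm\am(x)=\int\nabla\varphi\cdot v_t\,\dm\mu_t,
\]
which is exactly the distributional form of $\partial_t\mu_t+\nabla\cdot(\mu_t v_t)=0$, together with the boundary data $\mu_0=\am$, $\mu_1=\bm$.

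Finally I would compute the energy. Changing variables via $y=T_t(x)$ and using $v_t(T_t(x))=T(x)-x$,
\[
\int_0^1\norm{v_t}_{L^p(\mu_t)}^p\,\dm t=\int_0^1\!\!\int\norm{T(x)-x}^p\,\dm\am(x)\,\dm t=\int\norm{T(x)-x}^p\,\dm\am(x)=\C(T)=W_p^p(\am,\bm).
\]
The main obstacle is the rigorous handling of the geodesic/velocity correspondence: identifying the displacement interpolation with the metric geodesic of the previous proposition, and guaranteeing that $v_t$ is a genuine $L^p(\mu_t)$ field through the no-crossing/injectivity argument and the a.e.\ definition. An alternative, more abstract route avoids the explicit construction by appealing to the Ambrosio--Gigli--Savar\'e theory: any absolutely continuous curve in $\mathbb{W}_p$ carries a velocity field solving the continuity equation with $\norm{v_t}_{L^p(\mu_t)}\le|\mu'|(t)$, with equality attainable; applying this to the constant-speed geodesic, whose metric derivative is the constant $W_p(\am,\bm)$, immediately yields $\int_0^1\norm{v_t}_{L^p(\mu_t)}^p\,\dm t=W_p^p(\am,\bm)$ and confirms that the constructed field is optimal.
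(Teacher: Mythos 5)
The paper does not actually prove this statement: it is quoted verbatim from Theorem 5.27 of the cited reference (Santambrogio), so there is no in-paper argument to compare yours against. Your reconstruction is correct and is essentially the standard Benamou--Brenier/McCann argument that the reference itself uses: displacement interpolation $\mu_t=(T_t)_\sharp\am$ with $T_t=(1-t)\,\id+tT$, the competitor plan $(T_s,T_t)_\sharp\am$ plus the triangle inequality to get the constant-speed geodesic property, the no-crossing (c-cyclical monotonicity) argument to define $v_t\circ T_t=T-\id$ unambiguously for $t<1$, the weak continuity equation by differentiating $t\mapsto\int\varphi\,\dm\mu_t$, and the change of variables $y=T_t(x)$ for the energy identity. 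Two small points worth making explicit if you write this up: (i) the theorem's phrasing ``\emph{the} geodesic curve'' presupposes uniqueness, so you should add the one-line remark that for $p>1$ and $\am$ absolutely continuous every constant-speed geodesic from $\am$ to $\bm$ arises as the displacement interpolation of the (unique) optimal plan, which lets you pass from ``a geodesic'' to ``the geodesic''; (ii) measurability of $v_t=(T-\id)\circ T_t^{-1}$ as an $L^p(\mu_t)$ field deserves a word (a measurable-inverse/selection argument), though you correctly flag this as the technical crux, and your fallback via the Ambrosio--Gigli--Savar\'e correspondence between absolutely continuous curves in $\mathbb{W}_p$ and solutions of the continuity equation with $\norm{v_t}_{L^p(\mu_t)}=|\mu'|(t)$ is a legitimate way to bypass it entirely.
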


In other words, the geodesic curve $(\mu_t)_{t \in [0, 1]}$ between both distributions, together with the minimal energy velocity vector field $v$ solve the continuity equation. Moreover, its energy along this path is precisely equal to the Wasserstein distance $W^p_p(\am, \bm)$. If this vector field of minimal energy $v$ could be obtained, probability mass could be displaced according to the flow defined by the continuity equation, and the geodesic curve could be retrieved. Thus, we can reformulate the problem as a problem of optimal control, where $v$ is the control variate:
\begin{mini}
    {v}{ \Cdyn(v) = \int_0^1 \norm{v_t}^p_{L^p(\mu_t)}\dm t}
    {}{}
    \addConstraint{\partial_t \mu_t + \nabla \cdot (\mu_t v_t)}{= 0, \mu_{0}= \am, \mu_{1}= \bm}
    \label{eq:dyn_supp}
\end{mini}

It is worth noting that this approach not only gives a mapping between the two distributions but it also gives the entire geodesic curve so that smooth interpolations in $\mathbb{W}_p(\R^d)$ can be recovered.

\newpage
\section{Additional Samples}
\label{supp:samples}
\begin{figure}[htb!]
    \centering
    \includegraphics[width=1\textwidth]{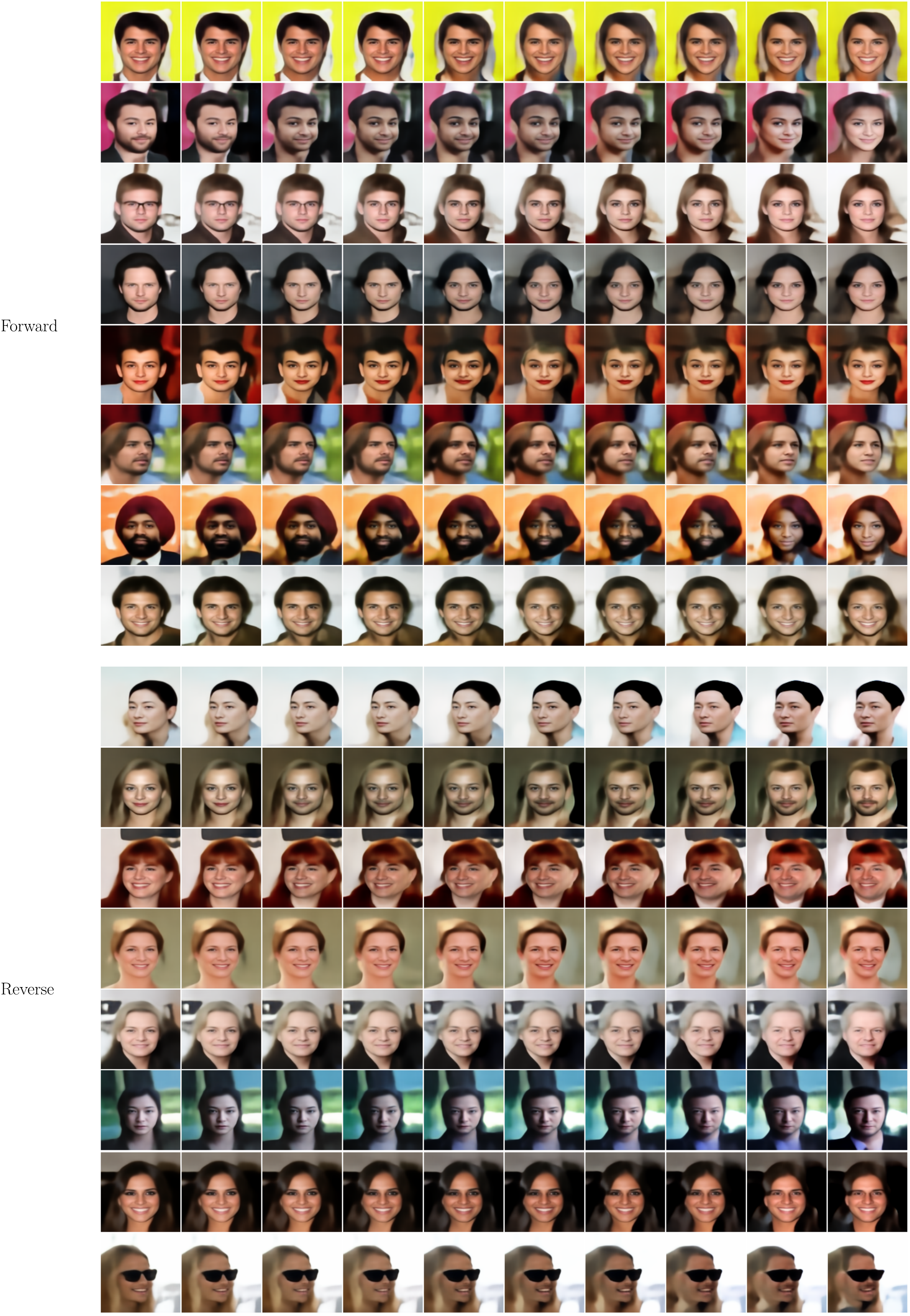}
    \caption{Male to Female, and Back.}
    \label{fig:h2f}
\end{figure}

\end{document}